\newtheorem{assumption}{Assumption}
\begin{document}

\title*{Exact fit of simple finite mixture models}
\author{Dirk Tasche}
\institute{Dirk Tasche \at Prudential Regulation Authority, \email{dirk.tasche@gmx.net}\newline
The author currently works at the Prudential Regulation Authority (a
division of the Bank of England). He is also a visiting professor at
Imperial College, London. 
The opinions expressed in this note are those of the author 
and do not necessarily reflect views of the Bank of England.}
%
%
\maketitle


\abstract{How to forecast next year's portfolio-wide credit default rate based on last year's default observations and the current score distribution? A classical approach to this problem consists of fitting a mixture of the conditional score distributions observed last year to the current score distribution. This is a special (simple) case of a finite mixture model where the mixture components are fixed and only the weights of the components are estimated. The optimum weights provide a forecast of next year's portfolio-wide default rate. We point out that the maximum-likelihood (ML) approach to fitting the mixture distribution not only gives an optimum but even an exact fit if we allow the mixture components to vary but keep their density ratio fix. From this observation we can conclude that the standard default rate forecast based on last year's conditional default rates will always be located between last year's portfolio-wide default rate and the ML forecast for next year. 
As an application example, then cost quantification is discussed. We also discuss how the mixture model based estimation methods can be used to forecast total loss. This involves the reinterpretation of an individual classification problem as a collective quantification problem.}

\section{Introduction}
\label{se:intro}

The study of finite mixture models was initiated in the 1890s by Karl Pearson when he wanted to model multimodal densities.
Research on finite mixture models continued ever since but its focus changed over time as further areas of application were identified
and available computational power increased. More recently the natural connection between finite mixture models and classification
methods with their applications in fields like machine learning or credit scoring began to be investigated in more detail.
In these applications, often it can be assumed that the mixture models are \emph{simple} in the sense
that the component densities are known (ie there is no dependence on unknown parameters) but their weights are unknown.

In this note, we explore a specific property of simple finite mixture models, namely that their maximum likelihood (ML) estimates provide 
an exact fit if they exist, and some consequences of this property. 
In doing so, we extend the discussion of the case 'no independent estimate of unconditional
default probability given' from \cite{tasche2013art} to the multi-class case and general probability spaces.

In Section~\ref{se:ExactFit}, we present the result on the exact fit property in a general simple finite mixture model context.
In Section~\ref{se:scaled}, we discuss the consequences of this result for classification and quantification problems and compare
the ML estimator with other estimators that were proposed in the literature. In Section~\ref{se:cost}, we revisit the
cost quantification problem as introduced in \cite{forman2008quantifying} as an application.
In Section~\ref{se:loss}, we illustrate by a stylised example
from mortgage risk management how the estimators discussed before can be deployed for the forecast of expected loss rates.
Section~\ref{se:conclusions} concludes the note.


\section{The exact fit property}
\label{se:ExactFit}

We discuss the properties of the ML estimator of the weights in a simple finite mixture model in a 
general setting which may formally be
described as follows:
\begin{assumption}\label{as:basic} 
$\mu$ is a measure on $(\Omega, \mathcal{H})$. $g>0$ is a probability density with respect to $\mu$.
Write $\mathrm{P}$ for the probability measure with $\frac{d \mathrm{P}}{d \mu} = g$. Write $\mathrm{E}$ for the
expectation with regard to $\mathrm{P}$.
\end{assumption}
In applications, the measure $\mu$ will often be a multi-dimensional Lebesgue measure or a counting measure. We study
this problem:

\noindent\textbf{Problem.} \emph{Approximate $g$ by a mixture of probability $\mu$-densities $f_1, \ldots, f_k$, ie 
$g \approx \sum_{i=1}^k p_i\,f_i$ for suitable $p_1, \ldots, p_k \ge 0$ with $\sum_{i=1}^k p_i = 1$.}

In the literature, most of the time a sample version of the problem (ie with $\mu$ as an empirical measure) is discussed.
Often the component densities $f_i$ depend on parameters that have to be estimated in addition to the weights $p_i$ (see \cite{redner1984mixture},
or \cite{fruhwirth2006finite} and for more recent surveys \cite{schlattmann2009medical}). In this note, we consider 
the simple case where the component densities $f_i$ are assumed to be known and fixed. This is a standard assumption
for classification (see \cite{MorenoTorres2012521}) and quantification (see \cite{forman2008quantifying}) problems. 

Common approaches to the approximation problem are
\begin{itemize}
	\item \textbf{Least Squares.} Determine $\min_{(p_1, \ldots, p_k)} \int\bigl(g - \sum_{i=1}^k p_i\,f_i\bigr)^2\,d \mu$ or its
	weight\-ed version $\min_{(p_1, \ldots, p_k)} \int g\,\bigl(g - \sum_{i=1}^k p_i\,f_i\bigr)^2\,d \mu$ (see \cite{hofer2013drift}
	and \cite{hopkins2010method} for recent applications to credit risk and text categorisation respectively). 
	The main advantage of the least squares approach compared with other approaches comes from the fact that
	closed-form solutions are available.
	\item \textbf{Kullback-Leibler (KL) distance.} Determine 
	$$\min_{(p_1, \ldots, p_k)} \int g\,\log\left(\frac{g}{\sum_{i=1}^k p_i\,f_i}\right) d\mu$$
	(see \cite{duPlessis2014110} for a recent discussion).	
\end{itemize}
In the following we examine a special property of the KL approximation which we call the \emph{exact fit}
property. First we note two alternative representations of the KL distance (assuming all integrals are well-defined and finite):
\begin{align}
\int g\,\log\left(\frac{g}{\sum_{i=1}^k p_i\,f_i}\right) d\mu & = 
		\int g\,\log(g)\,d\mu -\int g\,\log\left(\sum_{i=1}^k p_i\,f_i\right) d\mu \label{eq:KL1}\\
		& = \int g\,\log(g)\,d\mu -\int g\,\log(f_k)\,d\mu \notag \\
		& \qquad - \int g\,\log\left(1 +\sum_{i=1}^{k-1} p_i\,(X_i-1)\right) d\mu, \label{eq:KL2}
	\end{align}
with $X_i = \frac{f_i}{f_k}$, $i = 1, \ldots, k-1$ if $f_k > 0$.

The problem in \eqref{eq:KL1} to maximise $\int g\,\log\left(\sum_{i=1}^k p_i\,f_i\right) d\mu$  was
studied in \cite{saerens2002adjusting} (with $g = \text{empirical measure}$
which gives ML estimators). The authors suggested the Expectation-Maximisation (EM) algorithm 
involving conditional class probabilities for determining
the maximum. This works well in general but sometimes suffers from very slow convergence.

The ML version of \eqref{eq:KL1} had before been studied in \cite{peters1976numerical}.  There the authors analysed 
the same iteration procedure which they stated, however, in terms of densities instead of conditional probabilities. 
In \cite{peters1976numerical}, the iteration was derived differently to \cite{saerens2002adjusting}, 
namely by studying the gradient of the likelihood function. 
We revisit the approach from \cite{peters1976numerical}
from a different angle by starting from \eqref{eq:KL2}.

There is, however, the complication that $g\,\log\left(1 +\sum_{i=1}^{k-1} p_i\,(X_i-1)\right)$ is not necessarily 
integrable. But this observation does not apply to the gradient with respect to $(p_1, \ldots, p_{k-1})$. We therefore focus
on investigating the gradient. With $X_i$ as in \eqref{eq:KL2}, let
\begin{gather}
F(p_1, \ldots, p_{k-1}) \stackrel{\text{def}}{=} \int g\,\log\left(1 +\sum_{i=1}^{k-1} p_i\,(X_i-1)\right) \,d\mu,\label{eq:max}\\
(p_1, \ldots, p_{k-1}) \in S_{k-1} \stackrel{\text{def}}{=} \bigl\{(s_1, \ldots, s_{k-1}): 
s_1 > 0, \ldots, s_{k-1}>0, \sum_{i=1}^{k-1} s_i < 1\bigr\}.\notag
\end{gather}
From this we obtain for the gradient of $F$
\begin{align}
	G_j(p_1, \ldots, p_{k-1}) & \stackrel{\text{def}}{=} \frac{\partial F}{\partial p_j}(p_1, \ldots, p_{k-1})\notag\\ & =
	\int \frac{g\,(X_j-1)}{1 +\sum_{i=1}^{k-1} p_i\,(X_i-1)}\,d \mu, \quad j = 1, \ldots, k-1.\label{eq:grad}
\end{align}
$G_j(p_1, \ldots, p_{k-1})$ is well-defined and finite for $(p_1, \ldots, p_{k-1}) \in S_{k-1}$:
\begin{equation}\label{eq:finite}
\begin{split}
|G_j(p_1, \ldots, p_{k-1})| & \le \int \frac{g\,(X_j+1)}{1 - \sum_{i=1}^{k-1} p_i +\sum_{i=1}^{k-1} p_i\,X_i}\,d \mu \\
& \le \frac1{p_j} + \frac1{1 - \sum_{i=1}^{k-1} p_i} < \infty.
\end{split}
	\end{equation}
We are now in a position to state the main result of this note.
\begin{theorem}\label{th:ratio}
Let $g$ and $\mu$ be as in Assumption~\ref{as:basic}. Assume that $X_1$, $\ldots$, $X_{k-1} \ge 0$ 
as defined for \eqref{eq:KL2} are $\mathcal{H}$-measurable
functions on $\Omega$. Suppose there is a vector $(p_1, \ldots, p_{k-1}) \in S_{k-1}$ such that 
\begin{equation}\label{eq:zero}
0 \ = \ G_i(p_1, \ldots, p_{k-1}), \qquad i = 1, \ldots, k-1,	
\end{equation}
for $G_i$ as defined in \eqref{eq:grad}.
Define $p_k = 1-\sum_{j=1}^{k-1} p_j$. Then the following two statements hold:
\begin{itemize}
	\item[a)] $g_i = \frac{g\,X_i}{1 +\sum_{j=1}^{k-1} p_j\,(X_j-1)}$, $i = 1, \ldots, k-1$, and
	$g_k = \frac{g}{1 +\sum_{j=1}^{k-1} p_j\,(X_j-1)}$ are probability densities with respect to $\mu$ such
	that $g = \sum_{i=1}^k p_i\,g_i$ and $X_i =\frac{g_i}{g_k}$, $i = 1, \ldots, k-1$.
	\item[b)] Let $X_1 -1$, $\ldots$, $X_{k-1}-1$ additionally be linearly independent, ie 
	$$\mathrm{P}\left[\sum_{i=1}^{k-1} a_i\,(X_i-1) = 0\right] = 1\quad \text{implies}\quad 0 = a_1 = \ldots = a_{k-1}.$$
	Assume that $h_1, \ldots, h_{k-1} \ge 0$, $h_k > 0$ are $\mu$-densities of probability measures on $(\Omega, \mathcal{H})$ 
	and $(q_1, \ldots, q_{k-1}) \in S_{k-1}$ with $q_k = 1- \sum_{i=1}^{k-1} q_i$ is such that 
	$g = \sum_{i=1}^k q_i\,h_i$ and $X_i =\frac{h_i}{h_k}$, $i = 1, \ldots, k-1$. Then it follows that
	$q_i = p_i$ and $h_i = g_i$ for $g_i$ as defined in a), for $i = 1, \ldots, k$.
\end{itemize}
\end{theorem}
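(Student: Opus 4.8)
The plan is to treat part a) as an algebraic verification driven by one scalar identity, and part b) as a uniqueness statement that I will reduce to the fact that the gradient equation \eqref{eq:zero} has at most one solution. Throughout write $D = 1 + \sum_{j=1}^{k-1} p_j(X_j-1) = p_k + \sum_{j=1}^{k-1} p_j X_j$. Since $(p_1,\ldots,p_{k-1}) \in S_{k-1}$ forces $p_1,\ldots,p_{k-1},p_k > 0$ and $X_j \ge 0$, we have $D \ge p_k > 0$ everywhere, so each $g_i = \frac{gX_i}{D}$ and $g_k = \frac gD$ is a well-defined nonnegative function with $\int \frac gD\,d\mu \le \frac1{p_k} < \infty$. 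The structural identities then hold by construction: $\sum_{i=1}^k p_i g_i = \frac gD\bigl(\sum_{i=1}^{k-1} p_i X_i + p_k\bigr) = \frac gD\,D = g$, and $\frac{g_i}{g_k} = X_i$ because $g > 0$.

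The only substantive point in a) is that each $g_i$ is normalised. The hypothesis $G_i = 0$ reads $\int \frac{gX_i}{D}\,d\mu = \int \frac gD\,d\mu$, i.e. $\int g_i\,d\mu = \int g_k\,d\mu$ for every $i \le k-1$, so all $k$ of these integrals share a common finite value $c$. Integrating $g = \sum_{i=1}^k p_i g_i$ against $\mu$ and using $\int g\,d\mu = 1$ and $\sum_{i=1}^k p_i = 1$ gives $1 = c\sum_{i=1}^k p_i = c$. Hence $\int g_i\,d\mu = 1$ for all $i$, completing a).

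For b) the decisive observation is that the alternative decomposition is forced to satisfy the same gradient equation. From $X_i = \frac{h_i}{h_k}$ and $h_k > 0$ I get $h_i = X_i h_k$, hence $g = \sum_{i=1}^k q_i h_i = h_k E$ with $E = 1 + \sum_{j=1}^{k-1} q_j(X_j-1) = q_k + \sum_{j=1}^{k-1} q_j X_j \ge q_k > 0$; thus $h_k = \frac gE$ and $h_i = \frac{gX_i}{E}$. Imposing $\int h_i\,d\mu = \int h_k\,d\mu = 1$ yields $\int \frac{g(X_i-1)}{E}\,d\mu = 0$, i.e. $G_i(q_1,\ldots,q_{k-1}) = 0$ for all $i$. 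So both $(p_i)$ and $(q_i)$ solve \eqref{eq:zero}, and it remains to prove this solution is unique.

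To finish I would subtract the two sets of conditions: for each $i$, $\int g(X_i-1)\bigl(\frac1E - \frac1D\bigr)\,d\mu = 0$. Writing $\frac1E - \frac1D = \frac{D-E}{DE}$ and noting $D-E = \sum_j(p_j-q_j)(X_j-1)$, I multiply the $i$-th identity by $(p_i-q_i)$ and sum over $i$; everything collapses to $\int \frac{g\,(D-E)^2}{DE}\,d\mu = 0$. Because $g > 0$ and $D,E > 0$, this forces $D = E$ $\mu$-a.e., i.e. $\mathrm P\bigl[\sum_j(p_j-q_j)(X_j-1) = 0\bigr] = 1$, whence the linear-independence hypothesis gives $q_j = p_j$ for $j = 1,\ldots,k-1$ and so $q_k = p_k$. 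With $E = D$ in hand, $h_k = \frac gE = \frac gD = g_k$ and $h_i = X_i h_k = X_i g_k = g_i$, as claimed. I expect the main obstacle to be this last cancellation into a single nonnegative integrand: it is exactly the computation encoding the strict concavity of $F$ (equivalently, the strict monotonicity of its gradient), it relies crucially on $g > 0$ and the positivity of $D$ and $E$, and linear independence enters only at the very end to turn $D = E$ into equality of the weights.
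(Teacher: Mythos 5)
Your proof is correct, and while part a) coincides with the paper's own argument, your uniqueness argument in part b) takes a genuinely different route. The paper proves uniqueness of the solution to \eqref{eq:zero} by a second-order argument: it computes the Jacobi matrix $J=\bigl(\partial G_i/\partial q_j\bigr)$, shows $a\,J\,a^T\le 0$ with equality only for $a=0$ (this is where linear independence enters), and then appeals to the mean value theorem to conclude uniqueness; afterwards it identifies $h_i=g_i$ by showing $\int_H h_i\,d\mu=\int_H g_i\,d\mu$ for all $H\in\mathcal{H}$. You instead stay at first order: subtracting the gradient identities satisfied by $(p_i)$ and $(q_i)$ and contracting against $(p_i-q_i)$ collapses everything into the single identity $\int g\,(D-E)^2/(DE)\,d\mu=0$, which forces $D=E$ $\mu$-a.e.; linear independence then yields $p=q$, and the identifications $h_k=g/E=g/D=g_k$ and $h_i=X_i\,h_k=g_i$ come for free. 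This is the integrated (global, first-order) expression of the same strict concavity that the paper encodes infinitesimally in the negative definiteness of $J$. What your route buys: you never differentiate under the integral sign (a step the paper only sketches with ``it can easily be shown'') and you bypass the rather terse mean-value-theorem step. What the paper's route buys: negative definiteness of $J$ on all of $S_{k-1}$, which is conceptually reused in its remark on Newton--Raphson iteration. The one point you should make explicit is integrability: to write $\int g\,(X_i-1)\bigl(\tfrac1E-\tfrac1D\bigr)\,d\mu$ as the difference of the two vanishing gradient integrals, and to interchange the finite sum over $i$ with the integral, you need absolute convergence; this follows exactly as in \eqref{eq:finite}, since $|X_i-1|\le X_i+1$ and $|D-E|\le D+E$ give $g\,|X_i-1|\,|D-E|/(DE)\le g\,(X_i+1)/E+g\,(X_i+1)/D$, both of which are integrable.
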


\begin{proof} For a) we have to show that $\int g_i\,d \mu = 1$, $i = 1, \ldots, k$. The other claims 
are obvious. By \eqref{eq:grad}, Eq.~\eqref{eq:zero} implies that $\int g_i\,d \mu = \int g_k\,d \mu$, $i = 1, \ldots, k-1$.
This in turn implies
$$1 = \int g\,d \mu = \sum_{i=1}^k p_i\,\int g_i\,d \mu = \int g_k\,d \mu\,\sum_{i=1}^k p_i = \int g_k\,d \mu.$$
Hence $\int g_i\,d \mu = 1$ for all $i$.

With regard to b) observe that 
\begin{equation*}
\frac{\partial G_i}{\partial q_j}(q_1, \ldots, q_{k-1}) \ 
=\ -\int \frac{g\,(X_i-1)\,(X_j-1)}
{\big(1 +\sum_{\ell=1}^{k-1} q_\ell\,(X_\ell-1)\bigr)^2}\,d \mu.
\end{equation*}
Like in \eqref{eq:finite}, for $(q_1, \ldots, q_{k-1}) \in S_{k-1}$ it can easily be shown that $\frac{\partial G_i}{\partial q_j}$ is well-defined and finite.
Denote by $J = \bigl(\frac{\partial G_i}{\partial q_j}\bigr)_{i, j= 1, \ldots, k-1}$ the Jacobi matrix
of the $G_i$ and $\mathbf{X} = (X_1-1, \ldots, X_{k-1}-1)$. Let $a \in \mathbb{R}^{k-1}$ and $a^T$ be the
transpose of $a$. Then it holds that
$$a\,J\,a^T \ = \ - \int g\,\frac{(\mathbf{X}\,a^T)^2}{\big(1 +\sum_{\ell=1}^{k-1} q_\ell\,(X_\ell-1)\bigr)^2}\,d \mu \le 0.$$
In addition, by assumption on the linear independence of the $X_i-1$, $0 = a\,J\,a^T$ implies $a = 0 \in \mathbb{R}^{k-1}$.
Hence $J$ is negative definite. From this it follows by the mean value theorem that the solution $(p_1, \ldots, p_{k-1})$ of
\eqref{eq:zero} is unique in $S_{k-1}$. 

By assumption on $(q_1, \ldots, q_k)$ and $h_1, \ldots, h_k$ we have for $i = 1, \ldots, k-1$
\begin{equation}\label{eq:solution}
\begin{split}
1 & = \int h_i\,d \mu = \int \frac{g\,h_i}{\sum_{j=1}^k q_j\,h_j}\,d \mu = 
\int \frac{g\,X_i}{1+\sum_{j=1}^{k-1} q_j\,(X_j-1)}\,d \mu,  \quad\text{and}\\
1 & = \int h_k\,d \mu = \int \frac{g\,h_k}{\sum_{j=1}^k q_j\,h_j}\,d \mu = 
\int \frac{g}{1+\sum_{j=1}^{k-1} q_j\,(X_j-1)}\,d \mu.
\end{split}	
\end{equation}
Hence we obtain $0 = \int \frac{g\,(X_i-1)}{1+\sum_{j=1}^{k-1} q_j\,(X_j-1)}\,d \mu$, $i = 1, \ldots, k-1$.
By uniqueness of the solution of \eqref{eq:zero} it follows that $q_i = p_i$, $i = 1, \ldots, k$.
With this, it can be shown similarly to \eqref{eq:solution} that
$$\int_H h_i\,d \mu \ = \ \int_H g_i\,d \mu, \quad H \in \mathcal{H}, \ i = 1, \ldots, k.$$
This implies $h_i = g_i$, $i = 1, \ldots, k$. \hfill\qed
\end{proof}

\begin{remark}\label{rm:KL}
If the KL distance on the lhs of \eqref{eq:KL1} is well-defined and finite for all $(p_1, \ldots, p_{k-1}) \in S_{k-1}$
then under the assumptions of Theorem~\ref{th:ratio} b) there is a unique $(p^\ast_1, \ldots, p^\ast_{k-1}) \in S_{k-1}$
such that the KL distance of $g$ and $\sum_{i=1}^k p_i^\ast\,f_i$ is minimal. In addition, by Theorem~\ref{th:ratio} a), there
are densities $g_i$ such that the KL distance of $g$ and $\sum_{i=1}^k p_i^\ast\,g_i$ is zero -- this is the 
exact fit property of simple finite mixture models alluded to in the title of this note. \hfill \qed
\end{remark}

\begin{remark}\label{rm:difficult}
\begin{itemize}
	\item[a)] Theorem~\ref{th:ratio} b) provides a simple condition for uniqueness of the solution to \eqref{eq:zero}.
In the case $k=2$ this condition simplifies to
\begin{subequations}
\begin{equation}
	\mathrm{P}[X_1 = 1]\ <\ 1.
\end{equation}
For $k > 2$ there is no similarly simple condition for the existence of a solution in $S_{k-1}$. 
However, as noted in \cite{titterington1985statistical} (Example 4.3.1), there is a simple necessary and
sufficient condition for the existence of a solution in $S_1 = (0,1)$ to \eqref{eq:zero} in the case $k=2$:
\begin{equation}\label{eq:iff}
	\mathrm{E}[X_1]\ >\ 1 \qquad\text{and}\qquad \mathrm{E}\bigl[X_1^{-1}\bigr]\ >\ 1.
\end{equation} 
\end{subequations}
\item[b)] Suppose we are in the setting of Theorem~\ref{th:ratio} a) with $k > 2$ and all $X_i >0$. Hence there are 
$\mu$-densities $g_1, \ldots, g_k$, $(p_1, \ldots, p_{k-1})\in S_{k-1}$, and $p_k = 1- \sum_{i=1}^{k-1} p_i$ such that
$g = \sum_{i=1}^k p_i\,g_i$ and $X_i = \frac{g_i}{g_k}$, $i = 1, \ldots, k$. Let $\overline{g} = \frac{\sum_{j=2}^k p_j\,g_j}{1-p_1}$.
Then we have another decomposition of $g$, namely $g = p_1\,g_1 + (1-p_1)\,\overline{g}$, with $p_1 \in (0,1)$. The proof of
Theorem~\ref{th:ratio} b) shows that, with $\overline{X} = \frac{g_1}{\overline{g}}$ this implies
\begin{equation}\label{eq:k=2}
	0 \ = \ \int \frac{g\,(\overline{X}-1)}{1+p_1\,(\overline{X}-1)}\,d \mu.
\end{equation}
By a) there is a solution $p_1 \in (0,1)$ to \eqref{eq:k=2} if and only if 
$\mathrm{E}[\overline{X}] > 1$ and $\mathrm{E}[\bigl(\overline{X})^{-1}\bigr] > 1$.
\item[c)] As mentioned in a), an interesting question for the application of Theorem~\ref{th:ratio} is how to find out whether or not
there is a solution to \eqref{eq:zero} in $S_{k-1}$ for $k>2$. The iteration suggested in \cite{peters1976numerical} and
\cite{saerens2002adjusting} will correctly converge to a point on the boundary of $S_{k-1}$ if there is no solution in the
interior (Theorem~2 of \cite{peters1976numerical}). 
But convergence may be so slow that it may remain unclear whether a component of the limit is zero (and therefore
the solution is on the boundary) or genuinely very small but positive. The straight-forward Newton-Raphson approach for determining the
maximum of $F$ defined by \eqref{eq:max} may converge faster but may also become unstable for solutions close to or on
the boundary of $S_{k-1}$.

However, in case $k > 2$ the observation made in b) suggests that the following Gauss-Seidel-type iteration works if the initial
value $(p_1^{(0)}, \ldots, p_{k-1}^{(0)})$ with $p_k^{(0)} = 1 - \sum_{i=1}^{k-1} p_i^{(0)}$ 
is sufficiently close to the solution (if any) of \eqref{eq:zero}:
\begin{itemize}
	\item Assume that for some $n \ge 0$ an approximate solution 
	$(q_1, \ldots, q_{k}) =(p_1^{(n)}$, $\ldots$, $p_{k}^{(n)})$
	has been found.
	\item For $i = 1, \ldots, k$ try successively to update $(q_1, \ldots, q_{k})$ by solving \eqref{eq:k=2} with component $i$ 
	playing the role of component $1$ in b) and $p_1 = q_i$ as well as $\overline{g} = \frac{\sum_{j=1, j \not=i}^k q_j\,g_j}{1-q_i}$.
	If for all $i=1,\ldots, k$ the sufficient and necessary condition for the updated $q_i$ to be in $(0,1)$ is not satisfied then stop --
	it then is likely that there is no solution to \eqref{eq:zero} in $S_{k-1}$. Otherwise update where possible $q_i$ with the
	solution of \eqref{eq:k=2}, resulting in $q_{i, \text{new}}$, and set 
	$$q_{j, \text{new}} = \frac{1-q_{i, \text{new}}}{1-q_i}\,q_j, \quad j \not= i.$$
	\item After step $k$ set $(p_1^{(n+1)}, \ldots, p_{k}^{(n+1)}) = (q_{1, \text{new}}, \ldots, q_{k, \text{new}})$ if
	the algorithm has not been stopped by violation of the resolvability condition for \eqref{eq:zero}. 
	\item Terminate the calculation when a suitable distance measure between successive $(p_1^{(n)}$, $\ldots$, $p_{k}^{(n)})$ is
	sufficiently small. 	\qed
	\end{itemize}
\end{itemize}
\end{remark}

\section{Application to quantification problems}
\label{se:scaled}

Finite mixture models occur naturally in machine learning contexts. Specifically, in this note 
we consider the following context:

\begin{assumption}\label{as:context}\ 
\begin{itemize}
	\item $(\Omega, \mathcal{H})$ is a measurable space. For some $k \ge 2$, $A_1, \ldots, A_k \subset \Omega$ is a partition of $\Omega$.
$\mathcal{A}$ is the $\sigma$-field generated by $\mathcal{H}$ and the $A_i$, ie
$$\mathcal{A} \ = \ \sigma(\mathcal{H} \cup \{A_1, \ldots, A_k\}) \ = \ 
\bigl\{\bigcup_{i=1}^k (A_i \cap H_i): H_1, \ldots, H_k \in \mathcal{H}\bigr\}.$$
\item $\mathrm{P}_0$ is a probability measure on $(\Omega, \mathcal{A})$ with $\mathrm{P_0}[A_i] > 0$ for $i = 1, \ldots, k$. 
$\mathrm{P}_1$ is a probability measure on $(\Omega, \mathcal{H})$. Write $\mathrm{E}_i$ for the expectation with
respect to $\mathrm{P}_i$.
\item There is a measure $\mu$ on $(\Omega, \mathcal{H})$ and $\mu$-densities $f_1, \ldots, f_{k-1} \ge 0$, $f_k > 0$ such that 
$$\mathrm{P}_0[H\,|\,A_i] \ = \ \int_H f_i\,d \mu, \qquad i=1, \ldots, k, H \in \mathcal{H}.$$
\end{itemize}
\end{assumption}
The space $(\Omega, \mathcal{A}, \mathrm{P}_0)$ describes the training set of a classifier. On the training set, for each example
both the features (expressed by $\mathcal{H}$) and the class (described by one of the $A_i$) are known. Note that
$f_k >0$ implies $A_k \notin \mathcal{H}$.

$(\Omega, \mathcal{H}, \mathrm{P}_1)$ describes the test set on which the classifier is
deployed. On the test set only the features of the examples are known. 

In mathematical terms, \emph{quantification} might be
described as the task to extend $\mathrm{P}_1$ onto $\mathcal{A}$, based on properties observed on the training set, ie of $\mathrm{P}_0$.
Basically, this means to estimate \emph{prior class probabilities} (or prevalences) $\mathrm{P}_1[A_i]$ on the test dataset.
In this note, the assumption is that $\mathrm{P}_1\bigm|\mathcal{A} \not= \mathrm{P}_0\bigm|\mathcal{A}$. In the machine learning
literature, this situation is called \emph{dataset shift} (see \cite{MorenoTorres2012521} and the references therein). 

Specifically, we consider the following two dataset shift types (according to \cite{MorenoTorres2012521}):
\begin{itemize}
	\item \textbf{Covariate shift.} $\mathrm{P}_1\bigm|\mathcal{H} \not= \mathrm{P}_0\bigm|\mathcal{H}$ but
	$\mathrm{P}_1[A_i\,|\,\mathcal{H}] = \mathrm{P}_0[A_i\,|\,\mathcal{H}]$ for $i=1, \ldots, k$. In practice, this implies
	$\mathrm{P}_1[A_i] \not= \mathrm{P}_0[A_i]$ for most if not all of the $i$.
	\item \textbf{Prior probability shift.} $\mathrm{P}_1[A_i] \not= \mathrm{P}_0[A_i]$ for at least one $i$ but
	$\mathrm{P}_1[H\,|\,A_i]$ $= \mathrm{P}_0[H\,|\,A_i]$ for $i = 1, \ldots, k$, $H\in \mathcal{H}$. This implies
	$\mathrm{P}_1\bigm|\mathcal{H} \not= \mathrm{P}_0\bigm|\mathcal{H}$ if $\mathrm{P}_0[\cdot\,|\,A_1]$, 
	$\ldots$, $\mathrm{P}_0[\cdot\,|\,A_k]$ are linearly independent.
\end{itemize}
In practice, it is likely to have $\mathrm{P}_1[A_i] \not= \mathrm{P}_0[A_i]$ for some $i$ both in case
of covariate and prior probability shift.
Hence, quantification in the sense of estimation of the $\mathrm{P}_1[A_i]$ is important both for covariate and
prior probability shifts.
	
	Under a covariate shift assumption, a natural estimator of $\mathrm{P}_1[A_i]$ is given by
	\begin{equation}\label{eq:covariate}
		\widetilde{\mathrm{P}}_1[A_i] \ = \ \mathrm{E}_1\bigl[\mathrm{P}_0[A_i\,|\,\mathcal{H}]\bigr],
		\qquad i = 1, \ldots, k.
	\end{equation}
	Under prior probability shift, the choice of suitable estimators of $\mathrm{P}_1[A_i]$ is less obvious.

The following result generalises the \emph{Scaled Probability Average} method of \cite{bella2010quantification}
to the multi-class case. It allows to derive prior probability shift estimates of prior class probabilities from
covariate shift estimates as given by \eqref{eq:covariate}.
\begin{subequations}
\begin{proposition}\label{pr:ScaledProbAv}
Under Assumption~\ref{as:context}, suppose that there are $q_1 \ge 0$, $\ldots$, $q_k \ge 0$ with
$\sum_{i=1}^k q_i =1$ such that $\mathrm{P}_1$ can be represented as a simple finite mixture as follows: 
\begin{equation}\label{eq:mix}
	\mathrm{P}_1[H] \ = \ \sum_{i=1}^k q_i\,\mathrm{P}_0[H\,|\,A_i], \quad H \in \mathcal{H}.
\end{equation}
Then it follows that 
\begin{equation}\label{eq:matrix}
	\begin{pmatrix}
		\mathrm{E}_1\bigl[\mathrm{P}_0[A_1\,|\,\mathcal{H}]\bigr]\\
		\vdots\\
		\mathrm{E}_1\bigl[\mathrm{P}_0[A_k\,|\,\mathcal{H}]\bigr]
	\end{pmatrix} \ = \ M \,\begin{pmatrix}
		q_1\\
		\vdots\\
		q_k,
	\end{pmatrix}, 
	\end{equation}
where the matrix $M = (m_{ij})_{i,j=1, \ldots,k}$ is given	by
\begin{equation}\label{eq:alter}
\begin{split}
m_{ij} & \ = \ \mathrm{E}_0\bigl[\mathrm{P}_0[A_i\,|\,\mathcal{H}]\,|\,A_j]\bigr] \\
& \ = \ \mathrm{E}_0\bigl[\mathrm{P}_0[A_i\,|\,\mathcal{H}]\,\mathrm{P}_0[A_j\,|\,\mathcal{H}]\bigr] 
\,/\, \mathrm{P}_0[A_j].
\end{split}
	\end{equation}
\end{proposition}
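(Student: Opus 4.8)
The plan is to reduce everything to a single change-of-measure identity for $\mathcal{H}$-measurable integrands, after which both the matrix equation \eqref{eq:matrix} and the two representations of $m_{ij}$ in \eqref{eq:alter} fall out by direct substitution. First I would introduce the shorthand $\pi_i = \mathrm{P}_0[A_i\,|\,\mathcal{H}]$ for $i = 1, \ldots, k$. Each $\pi_i$ is by construction an $\mathcal{H}$-measurable function with values in $[0,1]$, and $\sum_{i=1}^k \pi_i = 1$ holds $\mathrm{P}_0$-almost surely. The point worth flagging at the outset is that, although $A_k \notin \mathcal{H}$, the conditional probability $\pi_i$ is nevertheless $\mathcal{H}$-measurable, so the integral $\mathrm{E}_1[\pi_i]$ appearing on the left of \eqref{eq:matrix} is well-defined even though $\mathrm{P}_1$ is only given on $\mathcal{H}$.

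Next I would establish the key identity: for every bounded $\mathcal{H}$-measurable $\phi$,
$$\mathrm{E}_1[\phi] \ = \ \sum_{j=1}^k q_j\,\mathrm{E}_0[\phi\,|\,A_j].$$
For an indicator $\phi = \mathbf{1}_H$ with $H \in \mathcal{H}$ this is exactly the mixture hypothesis \eqref{eq:mix}, since $\mathrm{E}_0[\mathbf{1}_H\,|\,A_j] = \mathrm{P}_0[H\,|\,A_j]$. By linearity it then holds for $\mathcal{H}$-simple functions, and by monotone convergence (approximating a nonnegative function from below by simple ones) it extends to all nonnegative bounded $\mathcal{H}$-measurable $\phi$, hence to all bounded ones. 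Feeding in $\phi = \pi_i$ turns the left-hand side into the $i$-th entry of the vector in \eqref{eq:matrix}, while the right-hand side becomes $\sum_{j=1}^k q_j\,\mathrm{E}_0[\pi_i\,|\,A_j]$; reading off $m_{ij} = \mathrm{E}_0[\pi_i\,|\,A_j]$ recovers the matrix product $Mq$ and the first line of \eqref{eq:alter}.

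For the second representation of $m_{ij}$ I would write $\mathrm{E}_0[\pi_i\,|\,A_j] = \mathrm{E}_0[\pi_i\,\mathbf{1}_{A_j}]/\mathrm{P}_0[A_j]$ and use the $\mathcal{H}$-measurability of $\pi_i$ together with the tower property:
$$\mathrm{E}_0[\pi_i\,\mathbf{1}_{A_j}] \ = \ \mathrm{E}_0\bigl[\pi_i\,\mathrm{E}_0[\mathbf{1}_{A_j}\,|\,\mathcal{H}]\bigr] \ = \ \mathrm{E}_0[\pi_i\,\pi_j],$$
since $\pi_i$ pulls out of the inner conditional expectation and $\mathrm{E}_0[\mathbf{1}_{A_j}\,|\,\mathcal{H}] = \pi_j$. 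Dividing by $\mathrm{P}_0[A_j]$ yields the second line of \eqref{eq:alter}.

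I do not expect a serious obstacle, as the content is essentially bookkeeping. The one step that warrants care — and the closest thing to a hard part — is justifying the change-of-measure identity for general $\mathcal{H}$-measurable integrands rather than only for events, that is, the passage from \eqref{eq:mix} stated for sets to integrals of functions, while keeping track throughout that every integrand fed into $\mathrm{E}_1$ is genuinely $\mathcal{H}$-measurable.
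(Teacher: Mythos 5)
Your proposal is correct and follows exactly the route the paper takes: the paper's proof is the one-liner ``Immediate from \eqref{eq:mix} and the definition of conditional expectation,'' and your argument is precisely that reasoning spelled out — extending \eqref{eq:mix} from indicators to $\mathcal{H}$-measurable integrands by the standard machine, then using the tower property and $\mathcal{H}$-measurability of $\mathrm{P}_0[A_i\,|\,\mathcal{H}]$ to obtain both lines of \eqref{eq:alter}. No gaps; you have simply made explicit what the author left implicit.
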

\end{subequations}
\begin{proof} Immediate from \eqref{eq:mix} and the definition of conditional expectation.\qed
\end{proof}
For practical purposes, the representation of $m_{ij}$ in the first row of \eqref{eq:alter} is more
useful because most of the time no exact estimate of $\mathrm{P}_0[A_j\,|\,\mathcal{H}]$ will be available. 
As a consequence there might be a non-zero difference between the values of the expectations in the
first and second row of \eqref{eq:alter} respectively. In contrast to the second row, for the derivation of the rhs of the
first row of \eqref{eq:alter}, however, no use of the specific properties of conditional expectations has
been made.

\begin{corollary}\label{co:R2}
In the setting of Proposition~\ref{pr:ScaledProbAv}, suppose that $k=2$. Define 
$$R_0^2 \ =\ \frac{\mathrm{var}_0\bigl[\mathrm{P}_0[A_1\,|\,\mathcal{H}]\bigr]}{\mathrm{P}_0[A_1]\,(1-\mathrm{P}_0[A_1])}
\in [0,1],$$
with $\mathrm{var}_0$ denoting the variance under $\mathrm{P}_0$. Then we have
\begin{equation}\label{eq:R2}
		\mathrm{E}_1\bigl[\mathrm{P}_0[A_1\,|\,\mathcal{H}]\bigr] \ = \ 
		\mathrm{P}_0[A_1]\,(1-R^2_0) + q_1\,R^2_0.
\end{equation}
\end{corollary}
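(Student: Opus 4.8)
The plan is to specialize Proposition~\ref{pr:ScaledProbAv} to the case $k=2$ and then massage the resulting scalar identity into the form \eqref{eq:R2}. Let me sketch the key relations.

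With $k=2$ I have a single free class since $q_2 = 1 - q_1$ and $\mathrm{P}_0[A_2] = 1 - \mathrm{P}_0[A_1]$. The matrix identity \eqref{eq:matrix} gives $\mathrm{E}_1[\mathrm{P}_0[A_1\,|\,\mathcal{H}]] = m_{11}\,q_1 + m_{12}\,q_2$. So the whole task reduces to computing $m_{11}$ and $m_{12}$ using the formula in the second row of \eqref{eq:alter} and showing the combination collapses to $\mathrm{P}_0[A_1](1-R_0^2) + q_1 R_0^2$.

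Let me verify the arithmetic in my head. Write $a = \mathrm{P}_0[A_1]$ and abbreviate $\pi = \mathrm{P}_0[A_1\,|\,\mathcal{H}]$, so $\mathrm{P}_0[A_2\,|\,\mathcal{H}] = 1-\pi$. From \eqref{eq:alter},
$$m_{11} = \frac{\mathrm{E}_0[\pi^2]}{a}, \qquad m_{12} = \frac{\mathrm{E}_0[\pi(1-\pi)]}{1-a} = \frac{\mathrm{E}_0[\pi] - \mathrm{E}_0[\pi^2]}{1-a}.$$
A basic observation is $\mathrm{E}_0[\pi] = \mathrm{E}_0[\mathrm{P}_0[A_1\,|\,\mathcal{H}]] = \mathrm{P}_0[A_1] = a$ by the tower property. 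Then $\mathrm{E}_0[\pi^2] = \mathrm{var}_0[\pi] + a^2$, and calling $v = \mathrm{var}_0[\pi]$, note $R_0^2 = v / (a(1-a))$, so $v = R_0^2\,a\,(1-a)$. Now $m_{11} = (v + a^2)/a = a + v/a$ and $m_{12} = (a - (v+a^2))/(1-a) = (a(1-a) - v)/(1-a) = a - v/(1-a)$.

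Substituting with $q_2 = 1-q_1$:
$$m_{11}q_1 + m_{12}(1-q_1) = \bigl(a + \tfrac{v}{a}\bigr)q_1 + \bigl(a - \tfrac{v}{1-a}\bigr)(1-q_1).$$
Collecting the $a$-terms gives $a$, and collecting the $v$-terms gives $v\bigl(\tfrac{q_1}{a} - \tfrac{1-q_1}{1-a}\bigr) = v\,\tfrac{q_1(1-a) - a(1-q_1)}{a(1-a)} = v\,\tfrac{q_1 - a}{a(1-a)} = R_0^2(q_1 - a)$. Hence the total is $a + R_0^2(q_1 - a) = a(1-R_0^2) + q_1 R_0^2$, which is exactly \eqref{eq:R2}. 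The claim $R_0^2 \in [0,1]$ follows because a conditional expectation has variance bounded by that of a Bernoulli$(a)$ variable, i.e. $v \le a(1-a)$.

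**The main obstacle** is essentially bookkeeping rather than conceptual difficulty: the proof is a short computation, and the only genuine input beyond algebra is the tower-property identity $\mathrm{E}_0[\mathrm{P}_0[A_1\,|\,\mathcal{H}]] = \mathrm{P}_0[A_1]$ together with recognizing $\mathrm{var}_0[\pi] = \mathrm{E}_0[\pi^2] - a^2$. I would state these two facts explicitly, then present the two-line collection of terms above. The only place to be careful is keeping the denominators $a$ and $1-a$ straight when combining $m_{11}$ and $m_{12}$, so I would write $R_0^2 = v/(a(1-a))$ once and factor it out cleanly at the end rather than carrying $v$ through.
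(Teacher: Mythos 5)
Your proof is correct and takes essentially the same route as the paper: both start from the first row of \eqref{eq:matrix} with the $m_{ij}$ from \eqref{eq:alter}, use the tower property $\mathrm{E}_0\bigl[\mathrm{P}_0[A_1\,|\,\mathcal{H}]\bigr] = \mathrm{P}_0[A_1]$, and finish with the variance identity; you merely introduce $v = \mathrm{var}_0\bigl[\mathrm{P}_0[A_1\,|\,\mathcal{H}]\bigr]$ at the start instead of converting $\mathrm{E}_0\bigl[\mathrm{P}_0[A_1\,|\,\mathcal{H}]^2\bigr]$ to a variance at the end, which is only a difference in bookkeeping. Your added justification that $R_0^2 \in [0,1]$ (via the variance bound $v \le \mathrm{P}_0[A_1]\,(1-\mathrm{P}_0[A_1])$) is a small extra the paper states without proof.
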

\begin{proof} We start from the first element of the vector-equation \eqref{eq:matrix} and apply
some algebraic manipulations:
\begin{align*}
\mathrm{E}_1\bigl[\mathrm{P}_0[A_1\,|\,\mathcal{H}]\bigr] & = 
\tfrac{q_1}{\mathrm{P}_0[A_1]}\,\mathrm{E}_0\bigl[\mathrm{P}_0[A_1\,|\,\mathcal{H}]^2\bigr]\\ 
& \qquad\qquad +
\tfrac{1-q_1}{1-\mathrm{P}_0[A_1]}\,\mathrm{E}_0\bigl[\mathrm{P}_0[A_1\,|\,\mathcal{H}](1-\mathrm{P}_0[A_1\,|\,\mathcal{H}])\bigr]\\
& = \bigl(\mathrm{P}_0[A_1]\,(1-\mathrm{P}_0[A_1])\bigr)^{-1}\,\Big(q_1\,(1-\mathrm{P}_0[A_1])\,\mathrm{E}_0\bigl[\mathrm{P}_0[A_1\,|\,\mathcal{H}]^2\bigr]\\
& \qquad\qquad + \mathrm{P}_0[A_1]\,(1-q_1)\,\bigl(\mathrm{P}_0[A_1]-\mathrm{E}_0\bigl[\mathrm{P}_0[A_1\,|\,\mathcal{H}]^2\bigr]\bigr)\Big)\\
& = \bigl(\mathrm{P}_0[A_1]\,(1-\mathrm{P}_0[A_1])\bigr)^{-1}\,\Big(q_1\,\bigl(\mathrm{E}_0\bigl[\mathrm{P}_0[A_1\,|\,\mathcal{H}]^2\bigr]
- \mathrm{P}_0[A_1]^2\bigr) \\
& \qquad\qquad+ \mathrm{P}_0[A_1]\,\bigl(\mathrm{P}_0[A_1] - \mathrm{E}_0\bigl[\mathrm{P}_0[A_1\,|\,\mathcal{H}]^2\bigr]\bigr)\Big).
	\end{align*}
From $\mathrm{P}_0[A_1] - \mathrm{E}_0\bigl[\mathrm{P}_0[A_1\,|\,\mathcal{H}]^2\bigr] =
\mathrm{P}_0[A_1]\,(1-\mathrm{P}_0[A_1]) - \mathrm{var}_0\bigl[\mathrm{P}_0[A_1\,|\,\mathcal{H}]\bigr]$ now the
result follows. \qed
\end{proof}
By Corollary~\ref{co:R2}, for binary classification problems 
the covariate shift estimator \eqref{eq:covariate} underestimates the change in the class probabilities if the dataset shift
is not a covariate shift but a prior probability shift. See Section~2.1 of \cite{forman2008quantifying} for
a similar result for the \emph{Classify \& Count} estimator. However, according to \eqref{eq:R2} the difference between the
covariate shift estimator and the true prior probability is the smaller the greater the discriminatory power 
(as measured by the generalised $R^2$) of the classifier is. Moreover, both \eqref{eq:R2} and \eqref{eq:matrix} provide 
closed-form solutions for $q_1$, $\ldots$, $q_k$ that transform the covariate shift estimates into correct estimates
under the prior probability shift assumption. In the following the estimators defined this way are called
\emph{Scaled Probability Average} estimators.

Corollary~\ref{co:R2} on the relationship between covariate shift and Scaled Probability Average estimates
in the binary classification case
can be generalised to the relationship between covariate shift and KL distance estimates.

\begin{corollary}\label{co:R2gen}
Under Assumption~\ref{as:context}, consider the case $k=2$. Let $X_1 = \frac{f_1}{f_2}$ and suppose that
\eqref{eq:iff} holds for $\mathrm{E} = \mathrm{E}_1$ such that a solution $p_1 \in (0,1)$ of \eqref{eq:zero} exists.
Then there is some $\alpha \in [0,1]$ such that
$$\mathrm{E}_1\bigl[\mathrm{P}_0[A_1\,|\,\mathcal{H}]\bigr] \ = \ (1-\alpha)\,\mathrm{P}_0[A_1] +
\alpha\,p_1.$$
\end{corollary}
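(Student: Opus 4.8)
The plan is to reduce the statement to a one-dimensional monotonicity argument. Throughout write $\pi = \mathrm{P}_0[A_1] \in (0,1)$ and, for $p \in [0,1]$, set $\phi_p(x) = \tfrac{p\,x}{p\,x + 1 - p}$. First I would invoke Bayes' theorem in the density form of Assumption~\ref{as:context}: the $\mathcal{H}$-marginal of $\mathrm{P}_0$ has $\mu$-density $\pi f_1 + (1-\pi)f_2$, so a canonical $\mathcal{H}$-measurable representative of the posterior is
$$\mathrm{P}_0[A_1\,|\,\mathcal{H}] \ = \ \frac{\pi\,f_1}{\pi\,f_1 + (1-\pi)\,f_2} \ = \ \frac{\pi\,X_1}{\pi\,X_1 + 1 - \pi} \ = \ \phi_\pi(X_1),$$
where the middle equality divides through by $f_2 > 0$. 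Writing $g = \tfrac{d\mathrm{P}_1}{d\mu}$ so that $\mathrm{E}_1[\,\cdot\,] = \int(\cdot)\,g\,d\mu$, the quantity to be analysed is $C := \mathrm{E}_1\bigl[\mathrm{P}_0[A_1\,|\,\mathcal{H}]\bigr] = u(\pi)$, where $u(p) := \mathrm{E}_1[\phi_p(X_1)]$.

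The key step is the algebraic identity $\phi_p(x) - p = p(1-p)\,\tfrac{x-1}{p\,x+1-p}$, which after taking $\mathrm{E}_1$ yields
$$u(p) - p \ = \ p(1-p)\,\psi(p), \qquad \psi(p) := \mathrm{E}_1\!\left[\frac{X_1-1}{1+p\,(X_1-1)}\right].$$
Comparing with \eqref{eq:grad} shows that $\psi(p) = G_1(p)$ for the test measure, so the hypothesised solution $p_1$ of \eqref{eq:zero} obeys $\psi(p_1) = 0$ and hence $u(p_1) = p_1$; in other words the KL weight $p_1$ is a fixed point of the covariate-shift map $p \mapsto u(p)$.

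It then remains to locate $C = u(\pi)$ relative to the endpoints $\pi$ and $p_1$ by two monotonicity facts. Differentiating under the integral (licensed by bounds of the type \eqref{eq:finite}), $\psi'(p) = -\mathrm{E}_1\bigl[(X_1-1)^2/(1+p(X_1-1))^2\bigr] < 0$, so $\psi$ is strictly decreasing — this is the $k=2$ instance of the negative-definite Jacobian in the proof of Theorem~\ref{th:ratio} b), the nondegeneracy $\mathrm{P}_1[X_1=1]<1$ being forced by \eqref{eq:iff} (cf.\ Remark~\ref{rm:difficult} a)). Since $\psi(p_1)=0$, this gives $\operatorname{sign}(C-\pi) = \operatorname{sign}\bigl(\pi(1-\pi)\psi(\pi)\bigr) = \operatorname{sign}(p_1-\pi)$. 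Likewise $u'(p) = \mathrm{E}_1\bigl[X_1/(1+p(X_1-1))^2\bigr] > 0$, so $u$ is strictly increasing and $\operatorname{sign}(C-p_1) = \operatorname{sign}\bigl(u(\pi)-u(p_1)\bigr) = \operatorname{sign}(\pi-p_1)$. Multiplying, $(C-\pi)(C-p_1) \le 0$, so $C$ lies in the closed interval with endpoints $\mathrm{P}_0[A_1]$ and $p_1$; taking $\alpha = (C-\pi)/(p_1-\pi)$ when $\pi \ne p_1$ (and $\alpha$ arbitrary otherwise) gives the asserted $\alpha \in [0,1]$.

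The main obstacle is not any individual computation but securing both sign relations at once: the sign of $C-\pi$ follows from the defining equation \eqref{eq:zero} via monotonicity of $\psi$, whereas the sign of $C-p_1$ needs the separate fact that the covariate-shift estimate $u(p)$ is itself increasing in the assumed prior $p$. Either fact alone only bounds $C$ on one side; together they trap it between $\pi$ and $p_1$. The remaining technical point — differentiating $\psi$ and $u$ under the integral sign — is routine and mirrors the finiteness estimate \eqref{eq:finite}.
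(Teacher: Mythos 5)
Your proof is correct, but it follows a genuinely different route from the paper's. The paper proves the corollary by reusing its own machinery: by Theorem~\ref{th:ratio}~a) the solution $p_1$ produces exact-fit densities $g_1, g_2$ with $g_1/g_2 = X_1$; it then defines a modified training measure $\widetilde{\mathrm{P}}_0$ that keeps the priors $\mathrm{P}_0[A_i]$ but uses the $g_i$ as class-conditional densities, so that $\mathrm{P}_1 = p_1\,\widetilde{\mathrm{P}}_0[\,\cdot\,|\,A_1] + (1-p_1)\,\widetilde{\mathrm{P}}_0[\,\cdot\,|\,A_2]$ holds exactly and Corollary~\ref{co:R2} applies with mixture weight $p_1$; finally, since a binary posterior depends only on the prior and the density ratio, $\widetilde{\mathrm{P}}_0[A_1\,|\,\mathcal{H}] = \mathrm{P}_0[A_1\,|\,\mathcal{H}]$, which gives the claim with the explicit weight $\alpha = \widetilde{R}^2_0$. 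You bypass both Theorem~\ref{th:ratio}~a) and Corollary~\ref{co:R2} entirely: writing the covariate-shift estimate as $u(\pi)$ with $u(p) = \mathrm{E}_1[\phi_p(X_1)]$ and $\pi = \mathrm{P}_0[A_1]$, you use the identity $u(p) - p = p(1-p)\,G_1(p)$ to recognise that \eqref{eq:zero} makes $p_1$ a fixed point of $u$, and then trap $u(\pi)$ between $\pi$ and $p_1$ via the two strict monotonicity facts $\psi' < 0$ and $u' > 0$ (both correctly traced back to \eqref{eq:iff}, which rules out $\mathrm{P}_1[X_1 = 1] = 1$ and forces $\mathrm{E}_1[X_1] > 0$, and both justified by dominated convergence with bounds of the type \eqref{eq:finite}). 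The trade-off is this: the paper's argument is shorter given its earlier results and, more importantly, identifies $\alpha$ concretely as a generalised $R^2$, i.e.\ the discriminatory power of the classifier under the exact-fit mixture -- the interpretation the paper leans on when it argues that the covariate-shift bias shrinks as discriminatory power grows; your argument is elementary and self-contained, yields $\alpha$ only as the abstract ratio $(u(\pi)-\pi)/(p_1-\pi)$ with no statistical meaning, but it makes the structural content -- the covariate-shift estimate always lies between the training prior and the KL estimate, with $u(\pi) - \pi$ and $p_1 - \pi$ agreeing in sign -- completely transparent, and it exhibits the fixed-point property of $p_1$ under the covariate-shift map, which the paper's proof leaves implicit.
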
	
\begin{proof} Suppose that $g>0$ is a density of $\mathrm{P}_1$ with respect to some measure $\nu$ on $(\Omega, \mathcal{H})$.
$\nu$ need not equal $\mu$ from Assumption~\ref{as:context}, and we can choose $\nu = \mathrm{P}_1$ and $g=1$ if there is no other candidate. 
By
Theorem \ref{th:ratio}~a) then there are $\nu$-densities $g_1 \ge 0$, $g_2 >0$ such that $\frac{g_1}{g_2}=X_1$ and
$g=p_1\,g_1 + (1-p_1)\,g_2$.

We define a new probability measure $\widetilde{\mathrm{P}}_0$ on $(\Omega, \mathcal{A})$ by setting
\begin{equation*}
	\begin{split}
	\widetilde{\mathrm{P}}_0[A_i] & = \mathrm{P}_0[A_i], \qquad i =1,2,\\
	\widetilde{\mathrm{P}}_0[H\,|\,A_i] & = \int_H g_i\,d \nu,\quad H\in\mathcal{H},\ i=1,2.
	\end{split}
\end{equation*}
By construction of $\widetilde{\mathrm{P}}_0$ it holds that 
$$\mathrm{P}_1[H] = p_1\,\widetilde{\mathrm{P}}_0[H\,|\,A_1] + (1-p_1)\,\widetilde{\mathrm{P}}_0[H\,|\,A_2],
\quad H\in\mathcal{H}.$$
Hence we may apply Corollary~\ref{co:R2} to obtain
$$\mathrm{E}_1\bigl[\widetilde{\mathrm{P}}_0[A_1\,|\,\mathcal{H}]\bigr] \ = \ 
		\mathrm{P}_0[A_1]\,(1-\widetilde{R}^2_0) + q_1\,\widetilde{R}^2_0,$$
where $\widetilde{R}^2_0 \in [0,1]$ is defined like $R^2_0$ with $\mathrm{P}_0$ replaced
by $\widetilde{\mathrm{P}}_0$.
Observe that also by construction of $\widetilde{\mathrm{P}}_0$ we have
$$\widetilde{\mathrm{P}}_0[A_1\,|\,\mathcal{H}] = \frac{\mathrm{P}_0[A_1]\,g_1}
{\mathrm{P}_0[A_1]\,g_1+\mathrm{P}_0[A_2]\,g_2} = 
\frac{\mathrm{P}_0[A_1]\,X_1}
{\mathrm{P}_0[A_1]\,X_1+\mathrm{P}_0[A_2]} = \mathrm{P}_0[A_1\,|\,\mathcal{H}].$$
With the choice $\alpha = \widetilde{R}^2_0$ this proves Corollary~\ref{co:R2gen}. \qed
\end{proof}

How is the KL distance estimator (or ML estimator in case of $\nu$ being the empirical measure)
of the prior class probabilities, defined by the 
solution of \eqref{eq:zero}, in general related to the 
covariate shift and Scaled Probability Average estimators?

Suppose the test  dataset differs from the training dataset by a prior probability shift with positive 
class probabilities, ie \eqref{eq:mix} applies with $q_1, \ldots, q_k > 0$. Under Assumption~\ref{as:context}
and a mild linear independence condition on the ratios of the densities $f_i$, then Theorem~\ref{th:ratio}
implies that the KL distance and Scaled Probability Average estimators give the same results. Observe that
in the context given by Assumption~\ref{as:context} the variables $X_i$ from Theorem~\ref{th:ratio} can be directly defined
as $X_i = \frac{f_i}{f_k}$, $i=1, \ldots, k-1$ or, equivalently by
\begin{equation}\label{eq:X}
	X_i \ = \ \frac{\mathrm{P}_0[A_i\,|\,\mathcal{H}]}{\mathrm{P}_0[A_k\,|\,\mathcal{H}]}\,
			\frac{\mathrm{P}_0[A_k]}{\mathrm{P}_0[A_i]}, \qquad i=1, \ldots, k-1.
\end{equation}
Representation  \eqref{eq:X} of the density ratios might be preferable in particular if the classifier involved
has been built by binary or multinomial logistic regression.

In general, by Theorem~\ref{th:ratio} the result of applying the KL distance estimator to the test feature
distribution $\mathrm{P}_1$, in the quantification
problem context described by Assumption~\ref{as:context}, is a representation of $\mathrm{P}_1$ as a mixture
of distributions whose density ratios are the same as the density ratios of the class feature distributions
$\mathrm{P}_0[\cdot\,|\,A_i]$, $i = 1, \ldots, k-1$.

Hence the KL distance estimator makes sense under an assumption of identical density ratios in the training
and test datasets. On the one hand this assumption is similar to the assumption of identical conditional class probabilities
in the covariate shift assumption but does not depend in any way on the training set prior class probabilities.
This is in contrast to the covariate shift assumption where implicitly a 'memory effect' with regard to the 
training set prior class probabilities is accepted.

On the other hand the 'identical density ratios' assumption is weaker than the 'identical densities' assumption 
(the former is implied by the latter) which is part of the prior probability assumption.

One possible description of 'identical density ratios' and the related KL distance estimator is that 
'identical density ratios' generalises 'identical densities' in such a way that exact fit of the
test set feature distribution is achieved (which by Theorem~\ref{th:ratio} is not always possible).
It therefore is fair to say that 'identical density ratios' is closer to 'identical densities' 
than to 'identical conditional class probabilities'.

Given training data with full information (indicated by the $\sigma$-field $\mathcal{A}$ in Assumption~\ref{as:context})
and test data with information only on the features but not on the classes 
($\sigma$-field $\mathcal{H}$ in Assumption~\ref{as:context}), it is not possible to decide whether the 
covariate shift or the identical density ratios assumption is more appropriate for the data. For both
assumptions result in exact fit of the test set feature distribution $\mathrm{P}_1\bigm| \mathcal{H}$ but
in general give quite different estimates of the test set prior class probabilities (see Corollary~\ref{co:R2gen} and
Section~\ref{se:loss}). 
Only if Eq.~\eqref{eq:zero}
has no solution with positive components it can be said that 'identical density ratios' does not properly
describe the test data because then there is no exact fit of the test set feature distribution. In that case
'covariate shift' might not be appropriate either but at least it delivers a mathematically consistent model of
the data.

If both 'covariate shift' and 'identical density ratios' provide consistent models (ie exact fit of 
the test set feature distribution) non-mathematical considerations of causality (are features caused by class 
or is class caused by features?) may help choosing the more suitable assumption. See \cite{fawcett2005response} 
for a detailed discussion of this issue.


\section{Cost quantification}
\label{se:cost}

'Cost quantification' is explained in \cite{forman2008quantifying} as follows: 
``The second form of the quantification task is for a common situation in
business where a cost or value attribute is associated with each case. For example,
a customer support log has a database field to record the amount of time spent to
resolve each individual issue, or the total monetary cost of parts and labor used
to fix the customer's problem. $\ldots$ The cost quantification task for machine learning: given a limited training set
with class labels, induce a cost quantifier that takes an unlabeled test set as input
and returns its best estimate of the total cost associated with each class. In other
words, return the subtotal of cost values for each class.''

Careful reading of Section~4.2 of \cite{forman2008quantifying} reveals that the favourite solutions 
for cost quantification presented by the author essentially apply only to the case where the cost attributes are
constant on the classes\footnote{%
Only then the $C^+$ as used in Equations (4) and (5) of \cite{forman2008quantifying} stand for the same 
conditional expectations. The same observation applies to $C^-$.}.

Cost quantification can be more generally treated under Assumption~\ref{as:context} of this note. 
Denote by $C$ the (random) cost associated with an example. According to the description of cost quantification quoted 
above then $C$ is actually a feature of the example and, therefore, may be considered an $\mathcal{H}$-measurable
random variable under Assumption~\ref{as:context}.

In mathematical terms, the objective of cost quantification is the estimation of the
total expected cost per class\footnote{%
For a set $M$ its indicator function $\mathbf{1}_M$ is defined as 
$\mathbf{1}_M(m) = 1$ for $m\in M$ and $\mathbf{1}_M(m) = 0$ for
$m\notin M$.}
$\mathrm{E}_1[C\,\mathbf{1}_{A_i}]$, $i = 1, \ldots, k$.

\textbf{Covariate-shift assumption.} Under this assumption we obtain
\begin{equation}\label{eq:cov.cost}
	\mathrm{E}_1[C\,\mathbf{1}_{A_i}] \ = \ \mathrm{E}_1\bigl[C\,\mathrm{P}_1[A_i\,|\,\mathcal{H}]\bigr]
	\ = \ \mathrm{E}_1\bigl[C\,\mathrm{P}_0[A_i\,|\,\mathcal{H}]\bigr].
\end{equation}
This gives a probability-weighted version of the 'Classify \& Total' estimator of \cite{forman2008quantifying}.

\textbf{'Constant density ratios' assumption.}
Let $X_i = \frac{f_i}{f_k}$, $i=1,\ldots, k-1$. If \eqref{eq:zero} (with $\mu =\mathrm{P}_1$ and $g=1$) has
a solution $p_1 >0$, $\ldots$, $p_{k-1} > 0$, $p_k = 1 - \sum_{j=1}^{k-1} p_j < 1$ then 
we can estimate the conditional class probabilities $\mathrm{P}_1[A_i\,|\,\mathcal{H}]$ by
\begin{align*}
 \mathrm{P}_1[A_i\,|\,\mathcal{H}] & = \frac{p_i\,X_i}{1+\sum_{j=1}^{k-1} p_j\,(X_j-1)},
\quad i =1,\ldots, k-1,\\
\mathrm{P}_1[A_k\,|\,\mathcal{H}] & = \frac{p_k}{1+\sum_{j=1}^{k-1} p_j\,(X_j-1)}.
\end{align*}
From this, it follows that
\begin{equation}\label{eq:ratio.cost}
\begin{split}
	\mathrm{E}_1[C\,\mathbf{1}_{A_i}]  & \ = \ p_i\,\mathrm{E}_1\left[\frac{C\,X_i}{1+\sum_{j=1}^{k-1} p_j\,(X_j-1)}\right],
	\quad i =1,\ldots, k-1,\\
	\mathrm{E}_1[C\,\mathbf{1}_{A_k}]  & \ = \ p_k\,\mathrm{E}_1\left[\frac{C}{1+\sum_{j=1}^{k-1} p_j\,(X_j-1)}\right].
\end{split}	
\end{equation}
Obviously, the accuracy of the estimates on the rhs of both \eqref{eq:cov.cost} and \eqref{eq:ratio.cost}
strongly depends on the accuracy of the estimates of $\mathrm{P}_0[A_i\,|\,\mathcal{H}]$ and the density ratios
on the training set. Accurate estimates of these quantities, in general, will make full use of the information in
the $\sigma$-field $\mathcal{H}$ (ie the information available at the time of estimation) and, because of the 
$\mathcal{H}$-measurability of $C$, of the cost feature $C$. In order to achieve this, $C$ must be used as an
explanatory variable when the relationship between the classes $A_i$ and the features as reflected in $\mathcal{H}$
is estimated (eg by a regression approach). As one-dimensional densities are relatively easy to estimate it might make sense
to deploy \eqref{eq:cov.cost} and \eqref{eq:ratio.cost} with the choice $\mathcal{H} = \sigma(C)$.

Note that this conclusion, at first glance, seems to contradict Section~5.3.1 of \cite{forman2008quantifying}. There it is recommended that
``the cost attribute almost never be
given as a predictive input feature to the classifier''. Actually, with regard to the cost quantifiers suggested in 
\cite{forman2008quantifying}, this recommendation is reasonable because the main component of the quantifiers as
stated in (6) of \cite{forman2008quantifying} is correctly specified  only if there is no dependence of the cost
attribute $C$ and the classifier. Not using $C$ as an explanatory variable, however, does not necessarily imply
that the dependence between $C$ and the classifier is weak. Indeed, if the classifier has got any predictive power and 
$C$ is on average different on the the different classes of examples then there must be a non-zero correlation between
the cost attribute $C$ and the output of the classifier.


\section{Loss rates estimation with mixture model methods}
\label{se:loss}

Theorem~\ref{th:ratio} and the results of Section~\ref{se:scaled} have obvious applications to 
the problem of forecasting portfolio-wide default rates in portfolios of rated or scored borrowers.
The forecast portfolio-wide default rate may be interpreted in an individual sense as a single borrower's
unconditional probability of default. But there is also an interpretation in a collective sense as
the forecast total proportion of defaulting borrowers.

The statements of Theorem~\ref{th:ratio} and Assumption~\ref{as:context} are agnostic in the sense of not 
suggesting an individual or collective interpretation of the models under inspection. But by explaining
Assumption~\ref{as:context} in terms of a classifier and the examples to which it is applied we 
have suggested an individual interpretation of the assumption.

However, there is no need to adopt this perspective on Assumption~\ref{as:context} and the results 
of Section~\ref{se:scaled}. Instead of interpreting $\mathrm{P}_0[A_1]$ as 
an individual example's probability of belonging to class~1 we could as well describe 
$\mathrm{P}_0[A_1]$ as the proportion of a mass or substance that has property~1. If we do so
we switch from an interpretation of probability spaces in terms of likelihoods associated 
with individuals to an interpretation in terms of proportions of parts of masses or substances.
\begin{table}[!tp]
\begin{center}
\caption{Fictitious report on mortgage portfolio exposure distribution and loss rates.}\label{tab:1}
\begin{tabular}{|l||r|r||r|c|}
\hline
\multicolumn{1}{|l||}{\bfseries LTV band}&\multicolumn{2}{c||}{\bfseries Last year}&\multicolumn{2}{c|}{\bfseries This year}\tabularnewline
\cline{2-3} \cline{4-5}
\multicolumn{1}{|l||}{}&\multicolumn{1}{c|}{\% of exposure}&\multicolumn{1}{c||}{of this \% lost}&
	\multicolumn{1}{c|}{\% of exposure}&\multicolumn{1}{c|}{of this \% lost}\tabularnewline
\hline
More than 100\% &$  10.3$&$15.0$ & $ 13.3$&?\tabularnewline\hline
Between 90\% and 100\% &$ 28.2$&$ 2.2$ & $ 24.2$&?\tabularnewline\hline
Between 70\% and 90\% &$ 12.9$&$ 1.1$ & $ 12.8$&?\tabularnewline\hline
Between 50\% and 70\% &$ 24.9$&$ 0.5$ & $ 25.4$&?\tabularnewline\hline
Less than 50\% &$ 23.8$&$ 0.2$ & $ 24.3$&?\tabularnewline\hline
\textbf{All}&$100.0$&$ 2.2$ & $100.0$&?\tabularnewline
\hline
\end{tabular}
\end{center}
\end{table}

Let us look at a retail mortgage portfolio as an illustrative example. Suppose that each mortgage
has a loan-to-value (LTV) associated with it which indicates how well the mortgage loan is secured by the
pledged property. Mortgage providers typically report their exposures and losses in tables that
provide this information per LTV-band without specifying numbers or percentages of borrowers involved. 
Table~\ref{tab:1} shows a stylised example of how such
a report might look like.

This portfolio description fits well into the framework described by Assumption~\ref{as:context}. Choose
events $H_1 =$ 'More than 100\% LTV', $H_2 =$ 'Between 90\% and 100\% LTV' and so on. Then the $\sigma$-field
$\mathcal{H}$ is generated by the finite partition $H_1, \ldots, H_5$. Similarly, choose $A_1 =$ 'lost' and
$A_2 =$ 'not lost'. The measure $\mathrm{P}_0$ describes last year's observations, $\mathrm{P}_1$ specifies
the distribution of the exposure over the LTV bands as observed at the beginning of this year -- which is
the forecast period. We can then try and replace the question marks in Table~\ref{tab:1} by deploying 
the estimators discussed in Section~\ref{se:scaled}. Table~\ref{tab:2} shows the results. 
\begin{table}[!tp]
\begin{center}
\caption{This year's loss rates forecast for portfolio from Table~\ref{tab:1}. The $R_0^2$ used for calculating
the Scaled Probability Average is 7.7\%.}\label{tab:2}
\begin{tabular}{|l||r|r|r|}
\hline
{\bfseries LTV band}&{\bfseries Covariate shift}&{\bfseries Scaled prob.~av.}&{\bfseries KL distance}\tabularnewline
&\% lost &\% lost & \% lost\tabularnewline\hline
More than 100\% & 15.0 & 34.6 &	35.4 \tabularnewline\hline
Between 90\% and 100\% &2.2 &	6.3	& 6.5\tabularnewline\hline
Between 70\% and 90\% &1.1	& 3.2	& 3.3\tabularnewline\hline
Between 50\% and 70\% &0.5	& 1.5	& 1.5 \tabularnewline\hline
Less than 50\% &0.2	& 0.6	& 0.6\tabularnewline\hline
\textbf{All}&2.8&7.3 & 7.1 \tabularnewline
\hline
\end{tabular}
\end{center}
\end{table}

Clearly, the
estimates under the prior probability shift assumptions are much more sensitive to changes of the
features (ie LTV bands) distribution than the estimate under the covariate shift assumption. Thus the theoretical 
results of Corollaries \ref{co:R2} and \ref{co:R2gen} are confirmed. But recall that there is no
right or wrong here as all the numbers in Table~\ref{tab:1} are purely fictitious. Nonetheless,
we could conclude that in applications with unclear causalities (like for credit risk measurement)
it might make sense to compute both covariate shift estimates and ML estimates (more suitable under
a prior probability shift assumption) in order to gauge the possible range of outcomes.


\section{Conclusions}
\label{se:conclusions}

We have revisited the maximum likelihood estimator (or more generally Kullback-Leibler (KL) distance estimator)
of the component weights in simple finite mixture models. We have found that (if all weights of the estimate are positive)
it enjoys an exact fit property which makes it even more attractive with regard to mathematical consistency.
We have suggested a Gauss-Seidel-type approach to the calculation of the KL distance estimator that
triggers an alarm if there is no solution with all components positive (which would indicate that the
number of modelled classes may be reduced).

In the context of two-class quantification problems, as a consequence of the exact fit property we have shown 
theoretically and by example that the straight-forward 'covariate shift' estimator of the prior class probabilities
may seriously underestimate the change of the prior probabilities if the covariate shift assumption is wrong and instead
a prior probability shift has occurred. This underestimation can be corrected by the Scaled Probability Average approach
which we have generalised to the multi-class case or the KL distance estimator. 

As an application example, we then have  discussed cost quantification, ie the attribution of total cost to
classes on the basis of characterising features when class membership is unknown. 
In addition, we have illustrated by example that
the mixture model approach to quantification is not restricted to the forecast of prior probabilities but
can also be deployed for forecasting loss rates.



\end{document}